\newcommand{\E}{\mathbb{E}}
\DeclareMathOperator*{\argmax}{argmax}
\theoremstyle{plain}
\newtheorem{thm}{Theorem}
\newtheorem{lemma}[thm]{Lemma}
\newtheorem{remark}{Remark}
\newtheoremstyle{TheoremNum}
        {\topsep}{\topsep}              %%% space between body and thm
        {\itshape}                      %%% Thm body font
        {}                              %%% Indent amount (empty = no indent)
        {\bfseries}                     %%% Thm head font
        {.}                             %%% Punctuation after thm head
        { }                             %%% Space after thm head
        {\thmname{#1}\thmnote{ \bfseries #3}}%%% Thm head spec
\theoremstyle{TheoremNum}
\begin{document}
%\begin{frontmatter}
	\title{A Note on the Equivalence of Upper Confidence Bounds\\ and Gittins Indices for Patient Agents}
	\author{Daniel Russo}
	\maketitle
	\begin{abstract}
		%Gittins index policies and upper confidence bound algorithms are perhaps the two most widely studied approaches to the multi-armed bandit problem. However, there two are usually studied separately from each other, with different problem formulations, and by different research communities. As a result, close links between Gittins indices and upper confidence bounds do not appear to be widely recognized. This article aims to help remedy this by giving a short and self-contained proof of a sharp relation between the two. I consider a discounted multi-armed bandit problem with Gaussian priors. I will show that, with a discount factor $\gamma$, the Gittins index  an arm is equal to $\gamma$-quantile of the posterior distribution of the arm's mean plus an error term that vanishes as $\gamma\to 1$. In this sense, for patient agents, a Gittins index essentially measures the highest plausible mean-reward of an arm in a manner equivalent to an upper confidence bound. 
	This note gives a short, self-contained, proof of a sharp connection between Gittins indices and Bayesian upper confidence bound algorithms. I consider a Gaussian multi-armed bandit problem with discount factor $\gamma$.  The Gittins index of an arm is shown to equal the $\gamma$-quantile of the posterior distribution of the arm's mean plus an error term that vanishes as $\gamma\to 1$. In this sense, for sufficiently patient agents, a Gittins index measures the highest plausible mean-reward of an arm in a manner equivalent to an upper confidence bound. 
	\end{abstract}

%{\bf Keywords:}	Gittins Index; Upper Confidence Bound; Multi-armed Bandit;

%\end{frontmatter}

\section{Introduction and Related Work}
There are two separate segments of the multi-armed bandit literature. One formulates a Bayesian multi-armed bandit problem as a Markov decision process and uses tools from dynamic programming to compute or approximate the optimal policy. This literature builds on a beautiful result that shows an optimal policy selects in each period the arm with highest Gittins index \citep{gittins1974dynamic, gittins1979dynamic}. 
A second segment of the literature focuses on simple heuristic algorithms-- which are often easy to adapt to settings in which exact dynamic programming is computationally intractable--and studies their performance through simulation and theoretical bounds on their regret  \cite{kaufmann2012bayesian,KL-UCB2013,rusmevichientong2010linearly,srinivas2012information}. This literature descends from a seminal paper by \citet{lai1985asymptotically} that shows the asymptotic growth rate of expected regret in a frequentist model is minimized by selecting in each period the arm with greatest upper-confidence bound. 
 
A sharp relationship between upper confidence bounds and the Gittins index of a patient agent (whose discount factor is close to 1) helps to unify these two segments of the literature. This provides enormous conceptual clarity, allowing the upper-confidence bounds of \citet{lai1985asymptotically} to be seen roughly as a generalization of and asymptotic approximation to the Gittins index. Unfortunately, such links seem to be known only to a few expert researchers. The goal of this short note is twofold. First, for Gaussian multi-armed bandit problems, it states an asymptotic equivalence between the Gittins index and a Bayesian upper confidence bound in a transparent form absent from the current literature. Second, the note gives short and elementary (if somewhat ugly) proofs that hopefully make this material accessible to large audience of researchers. 
 
Asymptotic links between Gittins indices and upper confidence bounds were first recognized by \citet{chang1987optimal}. That paper presents  a sophisticated asymptotic expansion of the solution of diffusion approximations to the optimal stopping problems defining a Gittins index. Unfortunately, the analysis is highly complex and is inaccessible to most multi-armed bandit researchers. Perhaps as a result, this pioneering work appears not to be widely known or cited\footnote{ According to Google scholar, \cite{chang1987optimal} was cited only once in 2018, while \cite{lai1985asymptotically} was cited well over 200 times.}. Hopefully, the transparent form of Theorem 1 along with its short proof will help to remedy this.

% That paper's main aim is to develop highly accurate analytic approximations to the Gittins index, Unfortunately, the paper involves a careful  multi-armed bandit researchers.  Perhaps as a consequence, t The current paper \textcolor{red}{RETURN TO ADD SENTENCE} 

Like \cite{chang1987optimal}, most other closely related papers are focused on developing approximations to the Gittins index with the goal of simplifying computation \citep{yao2006some, chick2009economic, gutin2016optimistic}. This note was highly influenced by my reading of \cite{gutin2016optimistic}. The upper bound on the Gittins index developed in Section \ref{sec: upper} comes from analyzing their algorithm. Also related is work by \cite{burnetas2003asymptotic, kaufmann2012bayesian, lattimore2016regret}, who study the regret of Gittins index like policies for finite-horizon undiscounted multi-armed bandit problems. See also \citep{nino2011computing} for a derivation of a finite-horizon approximation of the Gittins index and related computational issues. 
This short note is distinguished from these related works in that (1) I study the  Gittins index as classically defined rather than the heuristic of \cite{nino2011computing} and (2) this note is designed to develop conceptual insight through a sharp link between Gittins indices and Bayesian upper confidence bounds, rather than to develop  accurate computational approximations or give a frequentist regret analysis.

\section{Formulation and Main Result}
While the Gittins index is eventually used in multi-armed bandit problems, it is calculated by considering a modified \emph{one-armed} bandit problem. Consider a single arm with uncertain quality $\theta$. When played at time $t$, the arm generates a reward $R_t$ with $R_t | \theta \sim N(\theta , \sigma^2_W)$. The  posterior distribution of $\theta$ given observed rewards $R_0,\ldots R_{t-1}$ is Gaussian. We write $\theta| R_0\ldots R_{t-1} \sim N(\mu_t, \sigma_t^2)$ where the posterior parameters evolve according to
\[ 
\mu_{t}=\frac{\sigma_{t-1}^{-2}\mu_{t-1} + \sigma^{-2}_{W} R_{t-1} }{\sigma_{t-1}^{-2} + \sigma^{-2}_{W} }
\] 
and 
\begin{equation}\label{eq: posterior var}
\sigma_{t}^2 = \left(  \frac{1}{\sigma_{t-1}^{2}} + \frac{1}{\sigma^{2}_{W}} \right)^{-1}= \left(\frac{1}{\sigma_0^{2}}+\frac{1}{t\sigma_W^2}\right)^{-1}.
\end{equation}
To define the Gittins index, we follow the interpretation of \citet{weber1992gittins}. Imagine that the right to play this arm is restricted and for each play the decision maker must pay a tax $\lambda$. Alternatively, in any period the agent may choose to retire and earn a reward of 0 thereafter. This can be cast as a Markov decision process where the state at time $t$ is $(\mu_t, \sigma_t^2)$, which serves as a sufficient statistic for the agent's posterior belief. The expected reward when playing the arm at state $(\mu_t, \sigma_t^2)$ is $\mu_{t}$. The agent's actions are simple: after the first period, given the current state of her beliefs, she can continue or can retire. The value function for this MDP can be written as
\begin{equation}\label{eq: value function def}
V_{\gamma}^{\lambda}(\mu, \sigma)=\sup_{\tau\geq 1} \E\left[\sum_{t=0}^{\tau } \gamma^{t} \left( \theta -\lambda \right)  \, \middle\vert \, 
\mu_0  = \mu, \sigma_0^2 = \sigma^2 \right]=\sup_{\tau\geq 1} \E\left[\sum_{t=0}^{\tau } \gamma^{t} \left( \mu_{t} -\lambda \right)  \, \middle\vert \, 
\mu_0  = \mu, \sigma_0^2 = \sigma^2 \right] 
\end{equation}
where the supremum is over stopping times $\tau\geq 1$ with respect to $(R_{0},R_1, R_2, \ldots)$. The equality is due to the tower property of conditional expectation. (See Appendix \ref{subsec: value function def}.)  %For $\lambda< \mu_0$, the agent will certainly prefer to play the arm at least once. But for sufficiently large $\lambda$ , the agent is better off retiring immediately rather than paying the fee required to play the arm. 
The Gittins index is the largest tax such that participating in this game is advantageous to the agent, written as 
\begin{equation}\label{eq: gittins def}
\lambda_{\gamma}\left(\mu, \sigma^2 \right) = \sup\left\{\lambda\in \mathbb{R}  \, \bigg\vert\, \,  V_{\gamma}^{\lambda}(\mu, \sigma) \geq 0 \right\}.
\end{equation}
This is interpreted sometimes as either a ``fair'' or ``prevailing'' tax.
%Equivalently, the Gittins index is exact value of $\lambda$ that makes the decision-maker indifferent between sampling the arm and retiring immediately, interpreted sometimes as either a ``fair'' or ``prevailing'' tax. Rewriting the value function $V_{\gamma}^{\lambda}(\mu, \sigma)$ by Bellman's equation shows  $\lambda_{\gamma}(\mu, \sigma^2)$ is the unique solution, over $\lambda \in \mathbb{R}$, to 
%\begin{equation}\label{eq: gittins def}
%\E\left[ (\mu_0-\lambda) + \gamma V_{\gamma}^{\lambda}(\mu_1, \sigma_1^2)  \,\middle\vert\, \mu_0 = \mu, \, \sigma_0^2 = \sigma^2 \right] = 0. 
%\end{equation}

To develop some intuition, note that for any tax $\lambda>\mu$ the agent could feasibly explore for some large number of periods and then choose to continue sampling only if posterior mean strictly exceeds the tax $\lambda$. For a very patient agent, the benefit of repeatedly playing an arm that generates rewards above the tax would dwarf the expected cost of initial exploration. As a result, the prevailing tax for the arm must be high enough that this event occurs very infrequently. The following theorem makes this intuition precise, showing that up to an error term that vanishes as $\gamma\to 1$, the Gittins index is eactly equal to the $\gamma$ quantile of the $N(\mu, \sigma^2)$ prior distribution of $\theta$. For sufficiently patient agents, a Gittins index measures the highest plausible mean-reward of an arm in a manner equivalent to a Bayesian upper confidence bound. 
%In other words, this theorem shows a Gittins index is equal to a Bayesian upper confidence bound, up to an error term that vanishes as the discount factor tends to 1. 
\begin{thm} Fix any prior mean $\mu$ and prior variance $\sigma^2$.  Then, 
	\begin{equation}\label{eq: asymptotic approximation to gittins}
	\lambda_{\gamma}(\mu,\sigma^2) =\mu + \Phi^{-1}(\gamma)\sigma +o(1) \quad \text{as } \gamma \to 1,
	\end{equation}
	where $\Phi(\cdot)$ denotes the CDF of the standard normal distribution. 
\end{thm}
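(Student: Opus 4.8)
The plan is to sandwich $\lambda_\gamma(\mu,\sigma^2)$ between $\mu+\Phi^{-1}(\gamma)\sigma-\epsilon$ and $\mu+\Phi^{-1}(\gamma)\sigma+\epsilon$ for every fixed $\epsilon>0$, once $\gamma$ is close enough to $1$. First I would reduce to $\mu=0$: since the update for $\mu_t$ is affine in the observed rewards, shifting the prior mean by a constant $c$ shifts every $\mu_t$ by $c$, so $V_\gamma^\lambda(\mu+c,\sigma)=V_\gamma^{\lambda-c}(\mu,\sigma)$ and hence $\lambda_\gamma(\mu+c,\sigma^2)=c+\lambda_\gamma(\mu,\sigma^2)$. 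Abbreviate $q:=\Phi^{-1}(\gamma)\to\infty$ and $\bar\Phi:=1-\Phi$, so $1-\gamma=\bar\Phi(q)$. The ingredients are the standard tail estimates $\tfrac{\phi(a)}{2a}\le\bar\Phi(a)\le\tfrac{\phi(a)}{a}$ and $\phi(a)-a\bar\Phi(a)\ge\tfrac{\phi(a)}{2a^{2}}$ valid for large $a$, together with the facts that under the prior $\mu_t\sim N(0,s_t^{2})$ with $s_t^{2}=\sigma^{2}-\sigma_t^{2}\uparrow\sigma^{2}$ by \eqref{eq: posterior var}, that $(\mu_t)_t$ is a martingale, and that $s\mapsto\E[(sZ-c)^{+}]$ is increasing in $s>0$ for fixed $c>0$ ($Z$ standard normal).

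\emph{Upper bound.} Fix $\epsilon>0$, set $\lambda=q\sigma+\epsilon$, and show $V_\gamma^\lambda(0,\sigma)<0$ for $\gamma$ near $1$; since $\lambda\mapsto V_\gamma^\lambda$ is non-increasing this forces $\lambda_\gamma(0,\sigma^2)\le q\sigma+\epsilon$. For any stopping time $\tau\ge1$, bound $\mu_t-\lambda\le(\mu_t-\lambda)^{+}$ for $t\ge1$ while retaining the strongly negative $t=0$ term, so $\E[\sum_{t=0}^{\tau}\gamma^{t}(\mu_t-\lambda)]\le-\lambda+\sum_{t\ge1}\gamma^{t}\E[(\mu_t-\lambda)^{+}]$. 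Monotonicity in $s$ together with $s_t<\sigma$ gives $\E[(\mu_t-\lambda)^{+}]\le\sigma\,\E[(Z-(q+\epsilon/\sigma))^{+}]\le\sigma\,\phi(q+\epsilon/\sigma)$; summing the geometric series and using $\phi(q+\epsilon/\sigma)\le e^{-q\epsilon/\sigma}\phi(q)$ and $\phi(q)\le 2q(1-\gamma)$ bounds the tail sum by $2\sigma q\,e^{-q\epsilon/\sigma}$. Hence $V_\gamma^\lambda(0,\sigma)\le\sigma q\,(2e^{-q\epsilon/\sigma}-1)<0$ once $q>(\sigma/\epsilon)\ln 2$, and letting $\epsilon\downarrow0$ gives $\limsup_{\gamma\to1}(\lambda_\gamma(0,\sigma^2)-q\sigma)\le0$.

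\emph{Lower bound.} Fix $\epsilon>0$ and set $\lambda=q\sigma-\epsilon$, which is positive for $\gamma$ near $1$. Bound $V_\gamma^\lambda(0,\sigma)$ below by the value of the explore-then-commit rule with horizon $n=n(\gamma)$: play at times $0,\dots,n-1$, then continue forever if $\mu_n\ge\lambda$ and retire otherwise. Since $\E[\mu_t]=0$ for $t<n$ and, by the martingale property, $\E[\sum_{t\ge n}\gamma^{t}(\mu_t-\lambda)\mid\mathcal{H}_{n-1}]=\tfrac{\gamma^{n}}{1-\gamma}(\mu_n-\lambda)$ (with $\mathcal{H}_{n-1}$ the information in $R_0,\dots,R_{n-1}$), this rule has value $\tfrac{1}{1-\gamma}\big(\gamma^{n}\E[(\mu_n-\lambda)^{+}]-\lambda(1-\gamma^{n})\big)$, so it suffices to choose $n$ with $\gamma^{n}\E[(\mu_n-\lambda)^{+}]\ge\lambda(1-\gamma^{n})$. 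Take $n$ proportional to $q$ (rounded up), with the proportionality constant depending on $\sigma,\sigma_W,\epsilon$: then $\sigma_n^{2}\le\sigma_W^{2}/n\to0$, so $s_n\to\sigma$ and, for the constant large enough, $\lambda/s_n\le q-c$ for a fixed $c>0$; the tail estimates then give $\E[(\mu_n-\lambda)^{+}]\ge\tfrac{s_n^{3}}{2\lambda^{2}}\phi(q-c)\gtrsim\tfrac{1-\gamma}{q}\,e^{cq/2}$ (using $\phi(q-c)\ge e^{cq/2}\phi(q)$ and $\phi(q)\ge q(1-\gamma)$). On the other hand $n(1-\gamma)\asymp q\bar\Phi(q)\to0$, so $\gamma^{n}\to1$ and $\lambda(1-\gamma^{n})\le\lambda\,n(1-\gamma)\lesssim q^{2}(1-\gamma)$; the required inequality thus reduces to $e^{cq/2}\gtrsim q^{3}$, which holds for $\gamma$ near $1$. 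Hence $\lambda_\gamma(0,\sigma^2)\ge q\sigma-\epsilon$, and $\epsilon\downarrow0$ gives $\liminf_{\gamma\to1}(\lambda_\gamma(0,\sigma^2)-q\sigma)\ge0$. Together with the upper bound and the reduction to $\mu=0$, this establishes \eqref{eq: asymptotic approximation to gittins}.

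\emph{Main obstacle.} The delicate step is the lower bound, because of the tension in choosing the exploration horizon $n$: it must be large enough that the posterior variance $\sigma_n^{2}$ has essentially collapsed — so that $\mu_n$ is distributed almost like the $N(\mu,\sigma^{2})$ prior and $\Phi^{-1}(\gamma)\sigma$ really is the operative threshold — yet small enough that the discounted cost of the forced exploration phase, of order $\lambda\,n(1-\gamma)$, is dominated by the discounted continuation payoff $\tfrac{\gamma^{n}}{1-\gamma}\E[(\mu_n-\lambda)^{+}]$. Both can be arranged only because $1-\gamma=\bar\Phi(\Phi^{-1}(\gamma))$ is super-polynomially small in $\Phi^{-1}(\gamma)$, leaving enough slack to take $n$ merely polynomial in $\Phi^{-1}(\gamma)$; squeezing out of the Gaussian tail a lower bound on $\E[(\mu_n-\lambda)^{+}]$ that is still exponentially larger than $1-\gamma$ — despite $\lambda$ lying only $\epsilon$ below the prior's $\gamma$-quantile — is where the estimates have to be handled with care.
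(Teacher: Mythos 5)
Your proposal is correct, and its two halves have the same skeleton as the paper's proof: an upper bound obtained by replacing posterior-mean exceedances with prior-scale exceedances (your term-by-term use of $\E[(\mu_t-\lambda)^+]\le\E[(\sigma Z-\lambda)^+]\le\sigma\phi(\lambda/\sigma)$ is exactly the perfect-information/Jensen relaxation the paper carries out in Lemma \ref{lem: Gittins upper bound}), and a lower bound from an explore-then-commit policy with a slowly growing horizon, as in Lemma \ref{lem: Gittins lower bound}. Where you genuinely differ is in the analytic bookkeeping. The paper never compares to $\Phi^{-1}(\gamma)$ directly: it first proves Lemma \ref{lem: normal quantile} ($\Phi^{-1}(\gamma)=\sqrt{2\log(1/(1-\gamma))}+o(1)$), standardizes to a $N(0,1)$ prior, and then shows both bounds match $\sqrt{2\log(1/(1-\gamma))}$ by solving implicit equations such as \eqref{eq: defining upper bound} and \eqref{eq: gittins lower bound} with the square-root concavity trick (Lemma \ref{lem: sqrt concavity}), using exploration length $L_\gamma=\lceil\sigma_W^2\log(1/(1-\gamma))^2\rceil$. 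You instead keep $q=\Phi^{-1}(\gamma)$ throughout, sandwich $\lambda_\gamma(0,\sigma^2)$ in an $\epsilon$-window around $q\sigma$ using two-sided Mills-ratio bounds together with $1-\gamma=1-\Phi(q)$, and take a shorter horizon $n\asymp q$; you also only need the mean-shift reduction rather than full standardization. Your route buys directness (no quantile-expansion lemma, no asymptotic matching of implicit equations, and the conclusion appears immediately in the form stated in the theorem), while the paper's route buys the explicit closed-form $\sqrt{2\log(1/(1-\gamma))}$ approximation and reusable intermediate lemmas. Minor points to make airtight if you write this up: state the regime in which the tail inequalities $\phi(a)/(2a)\le 1-\Phi(a)\le\phi(a)/a$ and $\phi(a)-a(1-\Phi(a))\ge\phi(a)/(2a^2)$ hold (large $a$ suffices, as you note), verify the monotonicity of $s\mapsto\E[(sZ-c)^+]$ (its derivative is $\phi(c/s)>0$, or use conditional Jensen as the paper implicitly does), and spell out the choice of the constant in $n\asymp q$ that makes $\lambda/s_n\le q-c$ — all routine, and your sketch already indicates the right way to do each.
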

\begin{remark}
	While the Gittins index is derived by considering a one-armed bandit problem, both Gittins indices and upper confidence bounds are usually applied in bandit problems with $k>1$ arms. In that context, the Gittins index theorem shows an optimal policy plays at time $t$ the arm  $\argmax_{i\leq k} \lambda_{\gamma}(\mu_{t,i}, \sigma^2_{t,i})$ whose posterior parameters $(\mu_{t,i}, \sigma_{t,i})$ are associated with the maximal Gittins index. A Bayesian UCB algorithm plays the arm $\arg\max_{i\leq k} \mu_{t,i} + \sigma_{t,i} \Phi^{-1}(q_{t})$, where the posterior quantile $q_t$ is often treated as a tunable parameter and theory suggests values like $q_{t}= 1-1/T$ when there is a known time-horizon of $T$. The quantile in \eqref{eq: asymptotic approximation to gittins} is then analogous to using the natural time horizon of $T=1/(1-\beta)$ for a discounted problem. 
\end{remark}
\begin{remark}
	For readers more familiar with the upper-confidence bounds of \citet{auer2002finite} than the Bayesian form presented here, it is worth noting that these expressions are almost identical if an improper prior is used or an arm has been sampled a moderate number of times. More about these connections can be found in \cite{kaufmann2012bayesian}. 
\end{remark}

\section{Analysis}

\subsection{Technical Preliminaries}
\paragraph{Strict concavity of the square root.} The next lemma is used several times in the analysis.  The idea is that because $g(x)=\sqrt{x}$ is strictly concave and $g'(x)\to 0$ as $x\to \infty$ , $\sqrt{x+y} \approx \sqrt{x}$ if $x$ is much larger than $y$. 
\begin{lemma}%[Strict concavity of square root]
	\label{lem: sqrt concavity}
	Let $f: \mathbb{R}_{+} \to \mathbb{R}_{+}$ be any function satisfying $|f(x)| = o(\sqrt{x})$ as $x\to \infty$. Then, 
	\[ 
	\sqrt{x+f(x)} = \sqrt{x} + o(1) \quad \text{as } x\to \infty.   
	\]
\end{lemma}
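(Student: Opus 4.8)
The plan is to use the standard conjugate-radical identity to turn the difference $\sqrt{x+f(x)}-\sqrt{x}$ into a quotient whose numerator is exactly $f(x)$ and whose denominator grows like $\sqrt{x}$. Concretely, since $f$ takes nonnegative values, $x+f(x)\ge x\ge 0$ for all $x>0$, so both square roots are well defined and
\[
0\le \sqrt{x+f(x)}-\sqrt{x}=\frac{\bigl(\sqrt{x+f(x)}-\sqrt{x}\bigr)\bigl(\sqrt{x+f(x)}+\sqrt{x}\bigr)}{\sqrt{x+f(x)}+\sqrt{x}}=\frac{f(x)}{\sqrt{x+f(x)}+\sqrt{x}}.
\]

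Next I would lower-bound the denominator. Because $f(x)\ge 0$ we have $\sqrt{x+f(x)}+\sqrt{x}\ge \sqrt{x}$ (indeed $\ge 2\sqrt{x}$, but $\sqrt{x}$ suffices), hence
\[
0\le \sqrt{x+f(x)}-\sqrt{x}\le \frac{f(x)}{\sqrt{x}}.
\]
Finally, the hypothesis $|f(x)|=o(\sqrt{x})$ — equivalently $f(x)=o(\sqrt{x})$, since $f\ge 0$ — says exactly that $f(x)/\sqrt{x}\to 0$ as $x\to\infty$. Combining this with the squeeze above gives $\sqrt{x+f(x)}-\sqrt{x}\to 0$, which is the claim.

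There is essentially no obstacle here; the only point requiring a moment's care is ensuring the denominator $\sqrt{x+f(x)}+\sqrt{x}$ is bounded below by a positive multiple of $\sqrt{x}$, so that dividing by it is legitimate and converts an $o(\sqrt{x})$ numerator into an $o(1)$ quotient. Since $f$ is nonnegative this is immediate. (Had $f$ been allowed to take negative values, one would instead observe that $|f(x)|=o(\sqrt{x})$ forces $x+f(x)\ge x/2$ for all sufficiently large $x$, whence $\sqrt{x+f(x)}+\sqrt{x}\ge \sqrt{x/2}$ and the same argument goes through with a harmless constant; I expect the paper's later uses may well invoke the lemma in exactly this spirit.)
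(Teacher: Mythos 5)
Your proof is correct, but it takes a different route from the paper. The paper invokes Taylor's theorem for $g(x)=\sqrt{x}$, writing $\sqrt{x+f(x)}-\sqrt{x}$ as the first-order term $f(x)/(2\sqrt{x})$ plus a Lagrange-type remainder evaluated at an intermediate point $\tilde{x}\in[x,x+f(x)]$, and then argues both pieces vanish under $f(x)=o(\sqrt{x})$. You instead use the purely algebraic conjugate identity
\[
\sqrt{x+f(x)}-\sqrt{x}=\frac{f(x)}{\sqrt{x+f(x)}+\sqrt{x}},
\]
bound the denominator below by $\sqrt{x}$, and squeeze. Your argument is more elementary (no calculus, no intermediate point) and yields the explicit quantitative bound $0\le\sqrt{x+f(x)}-\sqrt{x}\le f(x)/\sqrt{x}$, which is arguably cleaner than tracking the remainder term. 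Your closing parenthetical is also well taken: the lemma is stated with $f\ge 0$ but the hypothesis is written as $|f(x)|=o(\sqrt{x})$, and the paper's later applications (e.g.\ the terms $2\log(\gamma\sqrt{2\pi})$ in Lemma \ref{lem: Gittins upper bound} and $h(\gamma)$ in Lemma \ref{lem: Gittins lower bound}) can involve negative perturbations, so the signed-$f$ variant you sketch, with $x+f(x)\ge x/2$ for large $x$, is indeed the version implicitly being used; the paper's Taylor-based proof also covers that case since $\tilde{x}$ lies between $x$ and $x+f(x)$, but your observation makes the needed extension explicit.
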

\begin{proof} 
	By Taylor's theorem, there is some $\tilde{x} \in [x, x+f(x)]$ such that
	\[
	\sqrt{x+f(x)} - \sqrt{x}  = \frac{f(x)}{2\sqrt{x}}+ \frac{1}{2}\left( \frac{-f(x)}{\tilde{x}^{3/2}}\right) 
	\]
	For $f(x)=o(\sqrt{x})$, both terms on the right hand side vanish as $x\to \infty$. 
	%Put $g(x)=\sqrt{x}$ and note $g''(x)< 0$. Suppose $f(x)\geq 0$. Then, 
%	\[ 
%	\sqrt{x+f(x)} \leq \sqrt{x}+ g'(x) f(x) =  \sqrt{x} + \frac{f(x)}{2\sqrt{x}} =\sqrt{x} + o(1).
%	\]
%	Next
%	\[
%	\sqrt{x-f(x)} \geq \sqrt{x}- g'(x-f(x)) f(x)= \sqrt{x} - \frac{f(x)}{2\sqrt{x-f(x)}}= \sqrt{x}+o(1).
%	\]
\end{proof}

\paragraph{Gaussian tail behavior.}
Let  $\phi(z) = \frac{1}{\sqrt{2\pi}} e^{-z^2 /2 }$ denote the PDF of the standard normal distribution. Because the PDF decays exponentially as $z$ increases, for large values of $z$, tail integrals like $1-\Phi(z)=\intop_{z'>z} \phi(z') dz'$ also decay exponentially as $e^{-z^2/2}$ as $z\to \infty$. Inverting this relation suggests an asymptotic approximation of $\Phi^{-1}(\gamma) \approx \sqrt{2 \log(1/(1-\gamma))}$ to the quantiles of the normal distribution. The next lemma, proved in \ref{subsec: proof of normal quantile}, makes this precise. 
\begin{lemma}\label{lem: normal quantile}
	As $\gamma \to \infty$,
	\[
	\Phi^{-1}(\gamma) =  \sqrt{2 \log\left(\frac{1}{1-\gamma} \right)} + o(1).
	\]
\end{lemma}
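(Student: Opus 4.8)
The plan is to invert the classical estimate for the Gaussian tail. Write $\bar\Phi(z) := 1-\Phi(z)$ and set $z_\gamma := \Phi^{-1}(\gamma)$, so that $1-\gamma = \bar\Phi(z_\gamma)$ and, since $\Phi$ is a continuous strictly increasing bijection from $\mathbb{R}$ onto $(0,1)$, we have $z_\gamma \to \infty$ as $\gamma \to 1$. In this notation the assertion is equivalent to $z_\gamma = \sqrt{2\log(1/\bar\Phi(z_\gamma))} + o(1)$, so it suffices to prove the purely analytic fact that $\sqrt{2\log(1/\bar\Phi(z))} - z \to 0$ as $z \to \infty$. (The displayed limit in the lemma should be read as $\gamma \to 1$.)

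The first step is to pin down $\log(1/\bar\Phi(z))$ using the standard two-sided Mills-ratio bound $\frac{z}{1+z^2}\,\phi(z) \le \bar\Phi(z) \le \frac{1}{z}\,\phi(z)$ for $z>0$ (the upper bound follows from $\phi(t)\le (t/z)\phi(t)$ for $t\ge z$, and the lower bound from $\frac{d}{dz}\bigl[\phi(z)/z\bigr] = -(1+z^{-2})\phi(z)$). Taking logarithms and using $\log\phi(z) = -z^2/2 - \tfrac12\log(2\pi)$, both sides yield
\[
\log\bigl(1/\bar\Phi(z)\bigr) = \tfrac{z^2}{2} + \log z + O(1) \qquad (z\to\infty),
\]
and hence $2\log\bigl(1/\bar\Phi(z)\bigr) = z^2 + 2\log z + O(1)$.

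The last step is to apply Lemma~\ref{lem: sqrt concavity} with $x = z^2$ and $f(x) = 2\log z + O(1) = \log x + O(1)$, which is nonnegative for $x$ large and satisfies $|f(x)| = o(\sqrt{x})$. The lemma gives $\sqrt{z^2 + f(z^2)} = \sqrt{z^2} + o(1) = z + o(1)$, i.e.\ $\sqrt{2\log(1/\bar\Phi(z))} = z + o(1)$. Specializing to $z = z_\gamma$ and recalling $\bar\Phi(z_\gamma) = 1-\gamma$ gives $\Phi^{-1}(\gamma) = \sqrt{2\log(1/(1-\gamma))} + o(1)$, as claimed.

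There is no real obstacle here: the only nontrivial ingredient is the Gaussian tail bound, and once that localizes $\log(1/\bar\Phi(z))$ to $z^2/2$ up to an $O(\log z)$ correction, the conclusion is exactly the regime covered by Lemma~\ref{lem: sqrt concavity}. If one prefers to avoid invoking that lemma, the final step is also immediate from $\sqrt{z^2+a} - z = a/(\sqrt{z^2+a}+z) \le a/(2z) \to 0$ whenever $a = O(\log z)$.
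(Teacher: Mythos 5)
Your proof is correct and follows essentially the same route as the paper: the Gordon/Mills-ratio bounds $\frac{z}{1+z^2}\phi(z)\le 1-\Phi(z)\le \frac{1}{z}\phi(z)$ combined with Lemma~\ref{lem: sqrt concavity}. The only difference is organizational --- you work directly with $z_\gamma=\Phi^{-1}(\gamma)$ and take logarithms of both tail bounds at once, which handles the upper and lower estimates simultaneously, whereas the paper inverts the upper tail bound to define an implicit $\overline{z}_\gamma$ and omits the symmetric lower-bound argument.
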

The same type of saddle-point approximation shows the integral $\E[(Z-z)^+] = \intop_{z'>z} z'\phi(z') dz'$ decays like $e^{-z^2/2}$ as $z\to \infty$. For our analysis, it is convenient to have explicit upper and lower bounds, like those in the following lemma. The upper bound here is a standard Gaussian maximal inequality and the lower bound applies Lemma 3 in \cite{qin2017improving}.  
\begin{lemma}%[Tail bound for truncated Gaussian expectations]
	\label{lem: truncated normal}
	For $X\sim N(0, \sigma^2)$ and $\lambda \geq \mu + 2\sigma$,
	\[ 
	\frac{\sigma^4}{\lambda^3} \phi\left(  \frac{\lambda}{\sigma} \right) \leq \E[(X-\lambda)^{+}]\leq \sigma \phi\left(  \frac{\lambda}{\sigma} \right).
	\]
\end{lemma}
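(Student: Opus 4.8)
The plan is to normalize away $\sigma$ and then treat the two inequalities separately, the lower one being the delicate part. Set $Z=X/\sigma\sim N(0,1)$ and $z=\lambda/\sigma$, so the hypothesis amounts to $z\ge 2$ (here the stated condition is effectively $\lambda\ge 2\sigma$, consistent with $X$ being centered). Since $\E[(X-\lambda)^{+}]=\sigma\,\E[(Z-z)^{+}]$ and $\sigma^{4}/\lambda^{3}=\sigma/z^{3}$, the lemma is equivalent to the two-sided estimate $\tfrac{\phi(z)}{z^{3}}\le\E[(Z-z)^{+}]\le\phi(z)$ for $z\ge 2$. The common starting point for both directions is the closed form
\[
\E[(Z-z)^{+}]=\intop_{z}^{\infty}(t-z)\phi(t)\,dt=\phi(z)-z\bigl(1-\Phi(z)\bigr),
\]
obtained by writing $t\phi(t)=-\phi'(t)$ for the first integral.

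For the upper bound I would simply discard the $-z$: on $[z,\infty)$ one has $t-z\le t$, so $\E[(Z-z)^{+}]\le\intop_{z}^{\infty}t\phi(t)\,dt=\phi(z)$. This is the standard Gaussian maximal inequality and uses nothing about the size of $z$.

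For the lower bound, the identity above reads $\E[(Z-z)^{+}]=\phi(z)\bigl(1-zM(z)\bigr)$, where $M(z)=(1-\Phi(z))/\phi(z)$ is the Mills ratio, so a lower bound on $\E[(Z-z)^{+}]$ is exactly an upper bound on $M(z)$. The crude estimate $M(z)\le 1/z$ only yields $\E[(Z-z)^{+}]\ge 0$, and truncating the asymptotic expansion $M(z)\sim z^{-1}-z^{-3}+3z^{-5}-\cdots$ is useless at $z=2$ (the terms are not yet decreasing there), so I would instead invoke Lemma 3 of \cite{qin2017improving}, which supplies the sharper continued-fraction bound $M(z)\le\frac{z^{2}+2}{z(z^{2}+3)}$. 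Substituting,
\[
\E[(Z-z)^{+}]\;\ge\;\phi(z)\left(1-\frac{z^{2}+2}{z^{2}+3}\right)=\frac{\phi(z)}{z^{2}+3}\;\ge\;\frac{\phi(z)}{z^{3}},
\]
where the last step uses $z^{3}-z^{2}-3\ge 0$ for $z\ge 2$ (it equals $1$ at $z=2$ and is increasing thereafter). This is the claimed bound.

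I expect the only real obstacle to be locating a Mills-ratio estimate that is sharp enough in the regime $z\approx 2$, which is where the inequality binds: the target $\E[(Z-z)^{+}]\ge\phi(z)/z^{3}$ has only about a quarter of slack at $z=2$, so any bound weaker than the second or third continued-fraction convergent — in particular the one-term Mills bound or a naive truncation of the asymptotic series — fails to close the gap. Everything else (the change of variables, the maximal inequality, and the final polynomial comparison $z^{3}\ge z^{2}+3$) is routine.
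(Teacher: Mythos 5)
Your proof is correct and follows essentially the same route as the paper, which likewise dispatches the upper bound as the standard Gaussian maximal inequality and obtains the lower bound by citing Lemma 3 of \cite{qin2017improving}; you merely make the lower bound self-contained via the identity $\E[(Z-z)^{+}]=\phi(z)-z(1-\Phi(z))$, the third Laplace continued-fraction convergent for the Mills ratio, and the comparison $z^{3}\ge z^{2}+3$ for $z\ge 2$. The numerics check out (at $z=2$ the bound holds with ratio about $1.26$), so whether or not the cited lemma is phrased exactly as a Mills-ratio bound, your argument stands on the classical convergent inequality alone.
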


%In this lemma, we . The starting point is Lemma \ref{lem: truncated normal}, which provides upper and lower bounds on the normal CDF revealing $1-\Phi(z) \approx e^{-z^2/2}$ for large $z$. Inverting this relation suggests $\Phi^{-1}(\gamma) \approx \sqrt{2 \log(1/(1-\gamma))}$. The formal argument for this fact is mirrored in analyzing the Gittins index in subsequent subsections. 

 %$\E[(Z-z)^+] = \intop_{z'>z} z'\phi(z') dz'$ also decay exponentially as $e^{-x^2/2}$ as $z\to \infty$. An explicit bound is given in the next lemma. 

%We begin in the next subsection by studying the $\gamma$-quantile of the standard normal distribution $\Phi^{-1}\left( \gamma \right)$. 

\subsection{Reduction to Indices for Standard Normal Distributions}
With some abuse of notation, for the moment let us explicitly capture the dependence of the Gittins index on the noise variance, setting
 $\lambda_{\gamma}(\mu, \sigma^2, \tilde{\sigma}^2) $ to be the Gittins index for a bandit process with prior mean $\mu$, prior variance $\sigma^2$ and noise variance $\tilde{\sigma}^2 $. A simple standardization argument shows \cite{gittins2011multi}
\[
\lambda_{\gamma}(\mu, \sigma^2, \tilde{\sigma}^2) = \mu + \sigma \lambda_{\gamma}\left(0, 1, \frac{\tilde{\sigma}^2}{\sigma^2} \right).
\]
Therefore it suffices to study the Gittins index for an arm with standard normal prior and some arbitrary noise variance we denote by $\sigma_{W}^2$. Combining this with Lemma \ref{lem: normal quantile}, our goal in subsequent subsections is to show $\lambda_{\gamma}(0,1) = \sqrt{2\log(1/(1-\gamma))}+o(1)$  as $\gamma \to 1$, where we treat $\sigma_{W}^2>0$ as an arbitrary positive constant interpreted as the noise-to-signal ratio.

\subsection{Upper Bound on the Gittins index}\label{sec: upper}
This subsection derives an upper bound on the Gittins index via an information relaxation \cite{brown2010information}. We consider a decision-maker who reveals noiseless signals of the true arm mean $\theta$ when she samples the arm. The prevailing tax for this decision-maker exceeds the prevailing tax for one who must base their decisions on noisy reward signals. As $\gamma\to 1$, this upper bound  matches both a lower bound given in Lemma \ref{lem: Gittins lower bound} and the posterior quantile in Lemma \ref{lem: normal quantile}. 
\begin{lemma}\label{lem: Gittins upper bound}
	\[
	\lambda_{\gamma}\left(0,1 \right) \leq \sqrt{2\log\left(\frac{1}{1-\gamma} \right)} + o(1) \quad \text{as  } \gamma \to 1.
	\]
\end{lemma}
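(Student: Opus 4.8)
The plan is to bound the Gittins index $\lambda_\gamma(0,1)$ from above by the prevailing tax in an information-relaxed problem where, upon sampling the arm, the decision-maker observes $\theta$ itself rather than a noisy reward. Formally, I replace the sequence of rewards $(R_0,R_1,\dots)$ by the sequence $(\theta,\theta,\dots)$ in the optimal stopping problem \eqref{eq: value function def}; since the original filtration is coarser, the supremum over stopping times can only increase, so the resulting value function $\tilde V_\gamma^\lambda$ dominates $V_\gamma^\lambda$ pointwise, and hence $\lambda_\gamma(0,1) \le \tilde\lambda_\gamma(0,1)$, the index of the relaxed problem. This is the information relaxation of \cite{brown2010information} alluded to in the text.

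Next I would solve the relaxed stopping problem explicitly. In the relaxed problem the agent plays once (paying $-\lambda$ and receiving a draw $\mu_1 = \mu_1(\theta)$ whose conditional mean given the $t=0$ state is just the prior mean — but here, crucially, after one play she learns $\theta$ exactly), and then, knowing $\theta$, she continues forever iff $\theta > \lambda$ and retires otherwise. So $\tilde V_\gamma^\lambda(0,1) = -\lambda + \gamma\,\E\!\left[ \theta - \lambda + \tfrac{\gamma}{1-\gamma}(\theta-\lambda)^+ \right]$ with $\theta\sim N(0,1)$; setting this to zero and solving for the threshold gives $\tilde\lambda_\gamma(0,1)$ implicitly in terms of $\E[(\theta-\tilde\lambda_\gamma)^+]$. (I should double-check the exact bookkeeping of which terms get discounted and whether the first play is forced, but the structure is: $0 = -c(\lambda) + \frac{\gamma^2}{1-\gamma}\E[(\theta-\lambda)^+]$ for an affine $c(\lambda)$, so $\E[(\theta-\tilde\lambda_\gamma)^+]$ is of order $(1-\gamma)$ times a bounded quantity.)

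Then I would invoke Lemma \ref{lem: truncated normal} with $\sigma = 1$: for $\tilde\lambda_\gamma \ge 2$ (which holds once $\gamma$ is close enough to $1$), the lower bound $\E[(\theta-\tilde\lambda_\gamma)^+] \ge \tilde\lambda_\gamma^{-3}\phi(\tilde\lambda_\gamma)$ combined with the equation from the previous step yields an inequality of the form $\tilde\lambda_\gamma^{-3} e^{-\tilde\lambda_\gamma^2/2} \le C(1-\gamma)$ for a constant $C$. Taking logarithms gives $\tilde\lambda_\gamma^2/2 \ge \log\!\big(\tfrac{1}{1-\gamma}\big) - 3\log\tilde\lambda_\gamma - O(1)$, i.e. $\tilde\lambda_\gamma^2 = 2\log\!\big(\tfrac{1}{1-\gamma}\big) + O(\log\log(1/(1-\gamma)))$. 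Applying Lemma \ref{lem: sqrt concavity} with $x = 2\log(1/(1-\gamma))$ and $f(x)$ the lower-order logarithmic correction — which is indeed $o(\sqrt{x})$ — converts this to $\tilde\lambda_\gamma = \sqrt{2\log(1/(1-\gamma))} + o(1)$, and the chain $\lambda_\gamma(0,1) \le \tilde\lambda_\gamma(0,1)$ finishes the proof.

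The main obstacle I anticipate is the second step: writing down the relaxed value function cleanly and correctly, in particular handling the constraint $\tau \ge 1$ (the first play is mandatory) and making sure the algebra that isolates the threshold is valid — one must verify $\tilde V_\gamma^\lambda$ is decreasing in $\lambda$ so that the sup defining the index is attained at the zero-crossing, and that $\tilde\lambda_\gamma \to \infty$ as $\gamma\to 1$ so that Lemma \ref{lem: truncated normal} applies. Once the explicit characterization $\E[(\theta - \tilde\lambda_\gamma)^+] \asymp (1-\gamma)$ is in hand, the remaining asymptotics are routine given Lemmas \ref{lem: sqrt concavity} and \ref{lem: truncated normal}.
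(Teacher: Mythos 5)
Your overall strategy is the same as the paper's: relax the problem so that a single play reveals $\theta$ exactly, bound $\lambda_\gamma(0,1)$ by the index of the relaxed problem, and extract the asymptotics of the resulting implicit threshold equation via Lemmas \ref{lem: truncated normal} and \ref{lem: sqrt concavity}. (Your bookkeeping of the relaxed value is slightly off --- with the forced first play it is simply $-\lambda+\frac{\gamma}{1-\gamma}\E[(\theta-\lambda)^+]$, with no extra factor of $\gamma$ in front of a second forced play --- but you flagged this yourself and it would only affect constants absorbed into the error term.)

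The genuine gap is in the step where you invoke Lemma \ref{lem: truncated normal}: you use the wrong half of it, so your key inequality points in the wrong direction. From the fixed-point relation $\E[(\theta-\tilde\lambda_\gamma)^+]\asymp(1-\gamma)\,c(\tilde\lambda_\gamma)$, the \emph{lower} bound $\E[(\theta-\lambda)^+]\ge \lambda^{-3}\phi(\lambda)$ gives, after taking logarithms, $\tilde\lambda_\gamma^2/2 \ge \log\bigl(\tfrac{1}{1-\gamma}\bigr)-O(\log\tilde\lambda_\gamma)$, which is a \emph{lower} bound on $\tilde\lambda_\gamma$. Your next line, ``i.e.\ $\tilde\lambda_\gamma^2 = 2\log(1/(1-\gamma)) + O(\log\log(1/(1-\gamma)))$,'' silently upgrades this one-sided inequality to a two-sided asymptotic; the missing upper-bound direction is precisely the content of the lemma, and since all you have is $\lambda_\gamma(0,1)\le\tilde\lambda_\gamma$, a lower bound on $\tilde\lambda_\gamma$ yields nothing about $\lambda_\gamma$. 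The fix is to use the other half of Lemma \ref{lem: truncated normal}, the Gaussian maximal inequality $\E[(\theta-\lambda)^+]\le\phi(\lambda)$ (with $\sigma=1$): at the threshold this gives $\lambda\le\frac{\gamma}{1-\gamma}\phi(\lambda)$, hence $\lambda^2+2\log\lambda\le 2\log\bigl(\tfrac{\gamma}{1-\gamma}\bigr)+O(1)$, and Lemma \ref{lem: sqrt concavity} then delivers $\tilde\lambda_\gamma\le\sqrt{2\log(1/(1-\gamma))}+o(1)$ --- exactly the paper's argument. The lower bound of Lemma \ref{lem: truncated normal} is the tool for Lemma \ref{lem: Gittins lower bound}, where the inequality must run the other way.
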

\begin{proof}
To simplify notation, write $\lambda_{\gamma} = \lambda_{\gamma}\left(0,1 \right)$ and note that we often use $\E[\theta]=0$ to simplify expressions. Consider a decision-maker who faces a one-armed bandit problem with no observation noise. For this decision-maker, playing the arm once is sufficient to perfectly reveal the true arm mean $\theta$. An optimal policy would then play the arm in every period if $\theta\geq \lambda$, and immediately retire otherwise. Of course, a Bayesian decision-maker is better off basing her retirement decision on perfect knowledge of $\theta$ than on noisy signals (See e.g. \cite{degroot1962uncertainty}). This can be verified directly in this case: the decision-maker with access to noiseless observations earns
\begin{equation}\label{eq: upper bound on value function}
-\lambda + \left(\frac{\gamma}{1-\gamma}\right)\E\left[\left(\theta - \lambda \right)^{+} \right] = \E[\theta - \lambda] + \E \sum_{t=1}^{\infty} \left(\theta - \lambda \right)^{+}  \geq \sup_{\tau>0} \E \sum_{t=0}^{\infty} \left(\theta - \lambda \right) \mathbf{1}(\tau \geq t) = V^{\lambda}_{\gamma}(0,1). 
\end{equation}
Therefore, the fair tax for the decision-maker who observes noiseless signals of $\theta$ exceeds the fair tax $\lambda_{\gamma}$ for one who must base her stopping decision on imperfect signals. (See also \cite{gutin2016optimistic} for a detailed proof.). We have \begin{align*}
	\lambda_{\gamma} := \sup\left\{ \lambda \in \mathbb{R}  \mid    V_{\gamma}^{\lambda}(0,1) \geq \lambda \right\} &\leq  \sup\left\{ \lambda \in \mathbb{R}  \mid    \frac{\gamma}{1-\gamma}\E\left[ \left( \theta - \lambda  \right)^{+} \right] \geq \lambda \right\}\\ &\overset{\rm Lem. \ref{lem: truncated normal}}{\leq} \sup\left\{ \lambda \in \mathbb{R}  \mid    \frac{\gamma}{1-\gamma}\phi(-\lambda) \geq \lambda \right\}  \\
	&= \sup\bigg\{ \lambda \in \mathbb{R}  \mid  \log\left(\frac{\gamma}{1-\gamma}\right) \geq \log\left(\frac{\lambda}{\phi(-\lambda)} \right)   \bigg\} := \overline{\lambda}_{\gamma}.
\end{align*}
Plugging in for the normal PDF $\phi$ and simplifying, we find the upper bound $\overline{\lambda}_{\gamma}$ on the Gittins index is defined implicitly by
\begin{equation}\label{eq: defining upper bound}
\sqrt{2\log(\overline{\lambda}_{\gamma}) + \overline{\lambda}_{\gamma}^2} = \sqrt{2\log\left(\frac{1}{1-\gamma}\right)  + 2\log\left( \gamma \sqrt{2\pi}\right)}.
\end{equation}
As  $\gamma\to 1$, the right hand side tends to infinity and by Lemma \ref{lem: sqrt concavity}
\begin{equation}\label{eq: simplifying upper bound 1}
\sqrt{2\log\left(\frac{1}{1-\gamma}\right)  + 2\log\left( \gamma \sqrt{2\pi}\right)} = \sqrt{2\log\left(\frac{1}{1-\gamma}\right)}+ o(1).
\end{equation}
This implies that  $\overline{\lambda}_{\gamma} \to \infty$ as $\gamma \to 1$. Applying Lemma \ref{lem: sqrt concavity} again shows
\begin{equation}\label{eq: simplifying upper bound 2}
\sqrt{2\log(\overline{\lambda}_{\gamma}) + \overline{\lambda}_{\gamma}^2}  = \overline{\lambda}_{\gamma} + o(1)  \quad \text{as  } \gamma \to 1.
\end{equation}
Combining Equation \eqref{eq: defining upper bound} with \eqref{eq: simplifying upper bound 1} and \eqref{eq: simplifying upper bound 2} establishes the claim. 
%\[ 
%\overline{\lambda}_{\gamma} = \sqrt{2\log\left(\frac{1}{1-\gamma}\right)} + o(1) \quad \text{as  } \gamma \to 1.
%\]
\end{proof}

\subsection{Lower Bound on the Gittins index}
We construct a lower bound on the Gittins index by analyzing the fair tax for an agent who employs a suboptimal heuristic policy. This agent explores for a predetermined number of periods $L$. Based on the resulting signals, she retires if $\mu_{L}<\lambda$ and otherwise commits to playing the arm indefinitely. The main idea is that large $L$ will almost perfectly reveal $\theta$, but as $\gamma \to 1$ the cost of this initial exploration is small relative to the potential value from discovering the arm has very high quality and hence has a negligible impact on the fair tax for the game. The proof will choose $L$ as a slowly growing function of $\gamma$, so that the lower bound constructed here matches the upper bound in Lemma \ref{lem: Gittins upper bound} as $\gamma \to 1$. Specifically, a choice of $L_{\gamma}= \lceil\sigma_{W}^2\log( 1/ (1-\gamma) )^2\rceil$ suffices for the proof.
Note that this result matches the posterior quantile in Lemma \ref{lem: normal quantile}, and, together with Lemma \ref{lem: Gittins upper bound}, completes the proof of Theorem 1. 
\begin{lemma}\label{lem: Gittins lower bound}
	\[
	\lambda_{\gamma}\left(0,1 \right) \geq \sqrt{2\log\left(\frac{1}{1-\gamma} \right)} + o(1) \quad \text{as  } \gamma \to 1.
	\]
\end{lemma}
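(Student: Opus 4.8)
\subsection*{Proof plan}

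The plan is to bound $\lambda_\gamma(0,1)$ from below by the ``fair tax'' of a deliberately suboptimal \emph{explore-then-commit} policy, and then to show that this fair tax already equals $\sqrt{2\log(1/(1-\gamma))}$ up to a vanishing error. Fix a candidate tax $\lambda$ and a horizon $L\geq 2$, and let $\pi_{L,\lambda}$ be the policy that plays the arm in periods $t=0,1,\dots,L-1$ and then, having observed the posterior mean $\mu_L$, commits to playing forever if $\mu_L\geq\lambda$ and retires otherwise. Its stopping time is determined by $R_0,\dots,R_{L-1}$, hence admissible. Since the posterior mean is a martingale with $\E[\mu_t]=\E[\theta]=0$ and, marginally, $\mu_L\sim N(0,s_L^2)$ with $s_L^2:=1-\sigma_L^2$ and $\sigma_L^2=(1+L/\sigma_W^2)^{-1}$ the posterior variance after $L$ pulls, the tower property gives the expected discounted payoff of $\pi_{L,\lambda}$ under tax $\lambda$ as
\[
W_\gamma^{\lambda}(L)\;=\;-\lambda\,\frac{1-\gamma^{L}}{1-\gamma}\;+\;\frac{\gamma^{L}}{1-\gamma}\,\E\!\big[(\mu_L-\lambda)^{+}\big].
\]
As $\pi_{L,\lambda}$ is feasible, $W_\gamma^{\lambda}(L)\leq V_\gamma^{\lambda}(0,1)$, so with $\underline\lambda_\gamma(L):=\sup\{\lambda:W_\gamma^{\lambda}(L)\geq 0\}$ we get $\underline\lambda_\gamma(L)\leq\lambda_\gamma(0,1)$ from \eqref{eq: gittins def}. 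Because $\lambda\mapsto W_\gamma^{\lambda}(L)$ is continuous and strictly decreasing, it suffices to exhibit, for each $\gamma$ sufficiently close to $1$, a threshold $\lambda^{\star}_\gamma=\sqrt{2\log(1/(1-\gamma))}-o(1)$ with $W_\gamma^{\lambda^{\star}_\gamma}(L_\gamma)\geq 0$; this forces $\lambda_\gamma(0,1)\geq\lambda^{\star}_\gamma$, which is the claim.

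For the horizon I would take $L_\gamma=\lceil\sigma_W^2\log(1/(1-\gamma))^2\rceil$, as suggested in the text. Writing $M:=\log(1/(1-\gamma))$, this makes (i) $(1-\gamma)L_\gamma\to 0$, hence $\gamma^{L_\gamma}\to 1$ and $1-\gamma^{L_\gamma}\sim(1-\gamma)L_\gamma$, so the discounted exploration cost is of order $(1-\gamma)M^2$; and (ii) $1-s_{L_\gamma}^2=\sigma_{L_\gamma}^2=(1+L_\gamma/\sigma_W^2)^{-1}=O(M^{-2})\to 0$, so $\mu_{L_\gamma}$ is asymptotically standard normal. Rearranging, $W_\gamma^{\lambda}(L_\gamma)\geq 0$ is equivalent to $\E[(\mu_{L_\gamma}-\lambda)^{+}]\geq\lambda(1-\gamma^{L_\gamma})/\gamma^{L_\gamma}$, and since $\lambda^{\star}_\gamma\to\infty$ eventually exceeds $2s_{L_\gamma}$, the lower bound in Lemma~\ref{lem: truncated normal} reduces the task to verifying
\[
\frac{s_{L_\gamma}^{4}}{\lambda^{3}}\,\phi\!\left(\frac{\lambda}{s_{L_\gamma}}\right)\;\geq\;\lambda\,\frac{1-\gamma^{L_\gamma}}{\gamma^{L_\gamma}}
\]
at $\lambda=\lambda^{\star}_\gamma:=\sqrt{2M}-\epsilon_\gamma$. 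Taking logarithms, the leading $-M$ term on the left (from $\phi(\lambda^{\star}_\gamma/s_{L_\gamma})$) cancels the leading $-M$ term on the right (from $1-\gamma^{L_\gamma}\sim(1-\gamma)L_\gamma$), and the inequality collapses to one of the form $\epsilon_\gamma\sqrt{2M}\geq C(\sigma_W)+4\log M+o(1)$: here the $4\log M$ bundles the polynomial prefactors (the $\lambda^{-3}$ on the left and the factor $L_\gamma\asymp M^2$ on the right), and the crucial estimate is that the non-unit variance $s_{L_\gamma}^2$ of $\mu_{L_\gamma}$ distorts the exponent only by $(\lambda^{\star}_\gamma)^{2}\big(s_{L_\gamma}^{-2}-1\big)/2=O(M^{-1})=o(1)$, using $1-s_{L_\gamma}^2=O(M^{-2})$. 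Choosing, say, $\epsilon_\gamma=5\log M/\sqrt{2M}$ makes the displayed inequality hold for all $\gamma$ near $1$ while $\epsilon_\gamma\to 0$, completing the proof.

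The main obstacle is calibrating the horizon $L_\gamma$. It must be large enough that one ``round'' of $L_\gamma$ noisy pulls reveals $\theta$ nearly as well as the single noiseless pull available in the upper-bound argument of Lemma~\ref{lem: Gittins upper bound}, so that $\E[(\mu_{L_\gamma}-\lambda)^{+}]$ approximates $\E[(\theta-\lambda)^{+}]$ and the commit step captures essentially the full upside; yet it must be small enough that the discounted cost of those $L_\gamma$ exploratory pulls, of order $(1-\gamma)L_\gamma$, stays negligible next to the reward $\asymp(1-\gamma)$ harvested on the event $\{\mu_{L_\gamma}\geq\lambda^{\star}_\gamma\}$. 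The $\log(1/(1-\gamma))^2$ scaling threads this needle, and the $o(1)$ slack $\epsilon_\gamma$ in the threshold is exactly what absorbs the various $\operatorname{poly}\log(1/(1-\gamma))$ factors---from $L_\gamma$ itself, from $1-s_{L_\gamma}^2$, and from the polynomial prefactor in Lemma~\ref{lem: truncated normal}---once one checks that each of them is subexponential in $M$.
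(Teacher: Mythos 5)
Your proposal is correct and follows essentially the same route as the paper: the same explore-then-commit policy with horizon $L_\gamma=\lceil\sigma_W^2\log(1/(1-\gamma))^2\rceil$, the same marginal law $\mu_{L_\gamma}\sim N(0,1-\sigma_{L_\gamma}^2)$, the same application of the lower bound in Lemma~\ref{lem: truncated normal}, and the same bookkeeping showing the exploration cost, the polylogarithmic prefactors, and the variance shrinkage $1-\sigma_{L_\gamma}^2$ are asymptotically negligible. The only difference is cosmetic: you verify the inequality at an explicit candidate threshold $\sqrt{2\log(1/(1-\gamma))}-\epsilon_\gamma$, whereas the paper defines the threshold implicitly and simplifies it via Lemma~\ref{lem: sqrt concavity}.
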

\begin{proof}
%{\bf Constructing a lower bound.}
Consider a decision-maker who faces a tax $\lambda$. Suppose the agent follows a policy of exploring for $L\in \mathbb{N}$ periods, and then either retiring if $\mu_{L}< \lambda$ or playing the arm for all future periods otherwise.  The value of this heuristic policy is a lower bound on the optimal policy, so for all fixed $L\in \mathbb{N}$ 
\[
V_{\gamma}^{\lambda}(0,1) \geq  -\sum_{t=0}^{L-1} \gamma^{t} \lambda +  \frac{\gamma^{L}}{1-\gamma}\E\left[ \left( \mu_{L} - \lambda  \right)^{+} \right] 
\geq  -L\lambda +  \frac{\gamma^{L}}{1-\gamma}\E\left[ \left( \mu_{L} - \lambda  \right)^{+} \right]. 
\]
Define $\mathcal{H}_{L-1}=(R_0, \ldots R_{L-1})$ to be the history of rewards prior to period $L$. The posterior mean is random due to its dependence on $\mathcal{H}_{L-1}$ and has distribution $\mu_L \sim N(\mu_0, 1- \sigma_L^{2})$. Here normality follows from the fact that $\mu_L$ is a linear combination of Gaussian observations $R_1,\ldots R_{L-1}$, we have $\E[\mu_L]=\E[\E[\theta |\mathcal{H}_{L-1}]] = \mu_0$ by the tower property of conditional expectation, and the variance formula follows from the law of total variance:
\begin{align*}
1={\rm var}(\theta)= {\rm var}\left(\E[ \theta| \mathcal{H}_{L-1}] \right) + \E\left[{\rm var}\left( \theta | \mathcal{H}_{L-1} \right) \right]
 = {\rm var}(\mu_L) + \sigma_{L}^2.
\end{align*}
%
% Lemma \ref{lem: truncated normal} implies 
%\[
%\log \E\left[ \left( \mu_{L} - \lambda  \right)^{+} \right] \geq  4\log\left( \sqrt{1-\sigma_{L}^2 } \right)- 3\log(\lambda) + \log \phi\left( \frac{-\lambda}{\sqrt{1-\sigma_{L}^2}}  \right).
%\]
This implies that for any $L\in \mathbb{N}$, 	
\begin{align*} 
\lambda_{\gamma} & \geq \sup\left\{ \lambda \in \mathbb{R} \mid    \frac{\gamma^{L}}{1-\gamma}\E\left[ \left( \mu_{L} - \lambda  \right)^{+}\right] \geq L\lambda \right\} \\  
& \overset{\rm Lem. \ref{lem: truncated normal}}{\geq}  \sup\left\{ \lambda \in \mathbb{R} \mid    \frac{\gamma^{L}}{1-\gamma} \frac{\left(1-\sigma_L^2\right)^2}{\lambda^3} \phi\left(  \frac{\lambda}{\sqrt{1-\sigma_{L}^2} } \right) \geq L\lambda \right\}  \\  
&=   \sup\left\{ \lambda \in \mathbb{R} \bigg\vert  \log \left(\frac{\gamma^{L}}{1-\gamma}\right) + \log \left( \frac{\left(1-\sigma_L^2\right)^2}{\lambda^3}\right) 
\geq \log\left(L\lambda\right)- \log \phi\left(  \frac{\lambda}{\sqrt{1-\sigma_{L}^2} } \right) \right\}.
\end{align*}
Now, choose $ L_{\gamma} = \lceil \sigma^2 \log\left( \frac{1}{1-\gamma}  \right)^2 \rceil $, which tends slowly to infinity as $\gamma\to 1$, and set $\underline{\lambda}_{\gamma}$ to be the lower bound corresponding to the choice of $L=L_{\gamma}$. Plugging in for the normal PDF and simplifying, we find $\underline{\lambda}_{\gamma}$ is defined implicitly by 
\begin{equation}\label{eq: gittins lower bound}
\sqrt{4 \log(\underline{\lambda}_{\gamma}) + \frac{\underline{\lambda}_{\gamma}^2}{2 (1-\sigma_{L_\gamma}^2)} }  = \sqrt{ \log\left( \frac{1}{1-\gamma} \right)+h(\gamma) }
\end{equation}
where $h(\gamma):=-\log\left( L_{\gamma} \right) +L_{\gamma}\log(\gamma) + 2\log(1- \sigma_{L_\gamma}^2)+
\log(\sqrt{2\pi}).$  We want to focus on the dominant terms on each side of equation \eqref{eq: gittins lower bound}, which are  $\underline{\lambda}_{\gamma}^2
/2 (1-\sigma_{L_{\gamma}}^2)$ and $\log\left( \frac{1}{1-\gamma}  \right)$. The next result shows the $h(\gamma)$ term has an asymptotically negligible influence.
\begin{lemma}\label{lem: bounding h}
	$h(\gamma) = o(\sqrt{\log(1/\gamma)})$ as $\gamma\to 1$. 
\end{lemma}
Together with Lemma \ref{lem: sqrt concavity}, this shows $\sqrt{\log(1/\gamma)+h(\gamma)}= \sqrt{\log(1/\gamma)}+o(1)$ as $\gamma \to 1$. Hence, the solution $\underline{\lambda}_{\gamma}$ to 
equation \eqref{eq: gittins lower bound} must also tend to $\infty$ as $\gamma \to 1$. Then, again by Lemma \ref{lem: sqrt concavity},
\begin{equation}\label{eq: simpler gittins lower bound LHS}
\sqrt{4 \log(\underline{\lambda}_{\gamma}) + \frac{\underline{\lambda}_{\gamma}^2}{2 (1-\sigma_{L_\gamma}^2)} } = \frac{\underline{\lambda}_{\gamma}}{\sqrt{2 (1-\sigma_{L_\gamma}^2)}}+o(1).
\end{equation}
Combining Equations \eqref{eq: gittins lower bound} and \eqref{eq: simpler gittins lower bound LHS} gives
\begin{equation}\label{eq: simplified Gittins lower bound}
\underline{\lambda}_{\gamma} = \sqrt{2(1-\sigma_{L_\gamma}^2)\log\left( \frac{1}{1-\gamma} \right)}+o(1). 
\end{equation}
%{\bf Limiting the influence of imperfect signal.}
The only remaining subtlety is the term $(1-\sigma_{L_\gamma}^2)$, which appears here since after $L_{\gamma}$ measurements the agent still has some remaining uncertainty about the value of $\theta$. From the formula \eqref{eq: posterior var} for posterior variance,  $\sigma_{L_\gamma}^2 \leq \sigma_W^2/L_{\gamma}$. Plugging in for $L_{\gamma}= \lceil\sigma_W^2\log( 1/ (1-\gamma) )^2\rceil $ gives $\sigma^2_{L_{\gamma}} \log( 1/ (1-\gamma) ) \leq 1$. Plugging this into \eqref{eq: simplified Gittins lower bound} gives
\begin{align*}
\underline{\lambda}_{\gamma} \geq   \sqrt{2\log\left( \frac{1}{1-\gamma}\right) -2 } + o(1) 
=  \sqrt{2\log\left( \frac{1}{1-\gamma}\right)}+o(1).
\end{align*}
\end{proof}

\section{Limitations and Open Problems}
While this note shows an equivalence between a Gittins index and a Bayesian upper-confidence bound, it should be stressed that this equivalence is asymptotic as the effective time-horizon of the problem grows. In particular, the Gittins index carefully captures the value of exploration given the time horizon of the problem and the variance of reward noise. Upper confidence bound algorithms do not and can engage in wasteful exploration if there is significant observation noise relative to the problem's time horizon.

One natural open direction is to extend Theorem 1 and its proof to single parameter exponential family distributions. Another question is whether extensions of the analysis can yield appropriate uniform or functional limit theorems analogous to Theorem 1. This is important to providing frequentist regret analysis of Gittins index algorithms or Bayesian regret analysis of UCB approximations. See \cite{chang1987optimal,lattimore2016regret}. 

\section*{Acknowledgements}
Much of this short note was written as material for a doctoral course taught at Northwestern in Spring 2017. I am grateful to the students in that course for their questions and feedback. 

\begin{singlespace}
	
\bibliography{references}

\begin{thebibliography}{21}
\providecommand{\natexlab}[1]{#1}
\providecommand{\url}[1]{\texttt{#1}}
\expandafter\ifx\csname urlstyle\endcsname\relax
  \providecommand{\doi}[1]{doi: #1}\else
  \providecommand{\doi}{doi: \begingroup \urlstyle{rm}\Url}\fi

\bibitem[Auer et~al.(2002)Auer, Cesa-Bianchi, and Fischer]{auer2002finite}
P.~Auer, N.~Cesa-Bianchi, and P.~Fischer.
\newblock Finite-time analysis of the multiarmed bandit problem.
\newblock \emph{Machine learning}, 47\penalty0 (2):\penalty0 235--256, 2002.

\bibitem[Brown et~al.(2010)Brown, Smith, and Sun]{brown2010information}
David~B Brown, James~E Smith, and Peng Sun.
\newblock Information relaxations and duality in stochastic dynamic programs.
\newblock \emph{Operations research}, 58\penalty0 (4-part-1):\penalty0
  785--801, 2010.

\bibitem[Burnetas and Katehakis(2003)]{burnetas2003asymptotic}
Apostolos~N Burnetas and Michael~N Katehakis.
\newblock Asymptotic bayes analysis for the finite-horizon one-armed-bandit
  problem.
\newblock \emph{Probability in the Engineering and Informational Sciences},
  17\penalty0 (1):\penalty0 53--82, 2003.

\bibitem[Capp{\'e} et~al.(2013)Capp{\'e}, Garivier, Maillard, Munos, and
  Stoltz]{KL-UCB2013}
O.~Capp{\'e}, A.~Garivier, O.-A. Maillard, R.~Munos, and G.~Stoltz.
\newblock Kullback-{L}eibler upper confidence bounds for optimal sequential
  allocation.
\newblock \emph{Annals of Statistics}, 41\penalty0 (3):\penalty0 1516--1541,
  2013.

\bibitem[Chang and Lai(1987)]{chang1987optimal}
Fu~Chang and Tze~Leung Lai.
\newblock Optimal stopping and dynamic allocation.
\newblock \emph{Advances in Applied Probability}, 19\penalty0 (4):\penalty0
  829--853, 1987.

\bibitem[Chick and Gans(2009)]{chick2009economic}
Stephen~E Chick and Noah Gans.
\newblock Economic analysis of simulation selection problems.
\newblock \emph{Management Science}, 55\penalty0 (3):\penalty0 421--437, 2009.

\bibitem[DeGroot et~al.(1962)]{degroot1962uncertainty}
Morris~H DeGroot et~al.
\newblock Uncertainty, information, and sequential experiments.
\newblock \emph{The Annals of Mathematical Statistics}, 33\penalty0
  (2):\penalty0 404--419, 1962.

\bibitem[Gittins et~al.(2011)Gittins, Glazebrook, and Weber]{gittins2011multi}
J.~Gittins, K.~Glazebrook, and R.~Weber.
\newblock \emph{Multi-Armed Bandit Allocation Indices}.
\newblock John Wiley \& Sons, Ltd, 2011.
\newblock ISBN 9780470980033.

\bibitem[Gittins and Jones(1979)]{gittins1979dynamic}
J.C. Gittins and D.M. Jones.
\newblock A dynamic allocation index for the discounted multiarmed bandit
  problem.
\newblock \emph{Biometrika}, 66\penalty0 (3):\penalty0 561--565, 1979.

\bibitem[Gittins(1974)]{gittins1974dynamic}
John Gittins.
\newblock A dynamic allocation index for the sequential design of experiments.
\newblock \emph{Progress in statistics}, pages 241--266, 1974.

\bibitem[Gordon(1941)]{gordon1941values}
Robert~D Gordon.
\newblock Values of mills' ratio of area to bounding ordinate and of the normal
  probability integral for large values of the argument.
\newblock \emph{The Annals of Mathematical Statistics}, 12\penalty0
  (3):\penalty0 364--366, 1941.

\bibitem[Gutin and Farias(2016)]{gutin2016optimistic}
Eli Gutin and Vivek Farias.
\newblock Optimistic gittins indices.
\newblock In \emph{Advances in Neural Information Processing Systems}, pages
  3153--3161, 2016.

\bibitem[Kaufmann et~al.(2012)Kaufmann, Capp{\'e}, and
  Garivier]{kaufmann2012bayesian}
E.~Kaufmann, O.~Capp{\'e}, and A.~Garivier.
\newblock On {B}ayesian upper confidence bounds for bandit problems.
\newblock In \emph{Conference on Artificial Intelligence and Statistics
  (AISTATS)}, 2012.

\bibitem[Lai and Robbins(1985)]{lai1985asymptotically}
T.L. Lai and H.~Robbins.
\newblock Asymptotically efficient adaptive allocation rules.
\newblock \emph{Advances in applied mathematics}, 6\penalty0 (1):\penalty0
  4--22, 1985.

\bibitem[Lattimore(2016)]{lattimore2016regret}
Tor Lattimore.
\newblock Regret analysis of the finite-horizon gittins index strategy for
  multi-armed bandits.
\newblock In \emph{Conference on Learning Theory}, pages 1214--1245, 2016.

\bibitem[Ni{\~n}o-Mora(2011)]{nino2011computing}
J.~Ni{\~n}o-Mora.
\newblock Computing a classic index for finite-horizon bandits.
\newblock \emph{INFORMS Journal on Computing}, 23\penalty0 (2):\penalty0
  254--267, 2011.

\bibitem[Qin et~al.(2017)Qin, Klabjan, and Russo]{qin2017improving}
Chao Qin, Diego Klabjan, and Daniel Russo.
\newblock Improving the expected improvement algorithm.
\newblock In \emph{Advances in Neural Information Processing Systems}, pages
  5381--5391, 2017.

\bibitem[Rusmevichientong and Tsitsiklis(2010)]{rusmevichientong2010linearly}
P.~Rusmevichientong and J.N. Tsitsiklis.
\newblock Linearly parameterized bandits.
\newblock \emph{Mathematics of Operations Research}, 35\penalty0 (2):\penalty0
  395--411, 2010.

\bibitem[Srinivas et~al.(2012)Srinivas, Krause, Kakade, and
  Seeger]{srinivas2012information}
N.~Srinivas, A.~Krause, S.M. Kakade, and M.~Seeger.
\newblock Information-theoretic regret bounds for {G}aussian process
  optimization in the bandit setting.
\newblock \emph{IEEE Transactions on Information Theory}, 58\penalty0
  (5):\penalty0 3250 --3265, may 2012.
\newblock ISSN 0018-9448.
\newblock \doi{10.1109/TIT.2011.2182033}.

\bibitem[Weber(1992)]{weber1992gittins}
Richard Weber.
\newblock On the gittins index for multiarmed bandits.
\newblock \emph{The Annals of Applied Probability}, 2\penalty0 (4):\penalty0
  1024--1033, 1992.

\bibitem[Yao et~al.(2006)]{yao2006some}
Yi-Ching Yao et~al.
\newblock Some results on the gittins index for a normal reward process.
\newblock In \emph{Time Series and Related Topics}, pages 284--294. Institute
  of Mathematical Statistics, 2006.

\end{thebibliography}
\bibliographystyle{plainnat}
\end{singlespace}

\appendix

\section{Omitted Technical Details}

\subsection{Proof of Lemma \ref{lem: normal quantile}}\label{subsec: proof of normal quantile}
\begin{proof}
	We use the following standard bounds on the Normal CDF \cite{gordon1941values}: for all $z\geq 0$,  
	\[ 
	\left(\frac{z}{1+z^2}\right) \phi(z) \leq 1-\Phi(z) \leq \left(\frac{1}{z}\right) \phi(z)
	\]
	We can use this to  upper bound $\Phi^{-1}(\gamma)$ as follows: 
\begin{align*}
	\Phi^{-1}(\gamma) = \inf\{z \in \mathbb{R} \mid \Phi(z) \geq 1- \gamma \} 
	&\leq \inf\{z \in \mathbb{R} \mid   \left(\frac{1}{z}\right) \phi(z) \geq 1- \gamma \} \\
	&= \inf\{z \in \mathbb{R} \mid   \log\left(\frac{\phi(z)}{z}\right)  \geq \log (1- \gamma) \}:= \overline{z}_\gamma.
\end{align*}
Plugging in for the normal PDF $\phi$ and simplifying, we find that $\overline{z}_{\gamma}$ is defined implicitly by 
%\[ 
%\log\left(\frac{1}{\overline{z}_{\gamma}\sqrt{2\pi}}\right) - \frac{\overline{z}_{\gamma}^2}{2} = \log (1-\gamma)
%\]
%or 
\begin{equation}\label{eq: upper bound on normal quantile} 
\sqrt{\overline{z}_{\gamma}^2 + 2\log\left(\overline{z}_{\gamma}\sqrt{2\pi}\right)}  = \sqrt{2\log\left(\frac{1}{1-\gamma} \right)}. 
\end{equation}
As $\gamma \to 1$, the right hand side of \eqref{eq: upper bound on normal quantile} tends to $\infty$, so it must be that $\overline{z}_{\gamma} \to \infty$. But since  $\log\left(\overline{z}_{\gamma}\sqrt{2\pi}\right) = o(\sqrt{\overline{z}_{\gamma}})$ as $\gamma \to 1$, applying Lemma \ref{lem: sqrt concavity} gives $\sqrt{\overline{z}_{\gamma}^2 + 2\log\left(\overline{z}_{\gamma}\sqrt{2\pi}\right)} = \overline{z}_\gamma + o(1)$. We conclude 
\[ 
\overline{z}_{\gamma} =  \sqrt{2\log\left(\frac{1}{1-\gamma} \right)} + o(1) \quad \text{as  } \gamma \to 1.  
\]
The proof of the lower bound follows the same steps and is omitted. 
\end{proof}

\subsection{Proof of Lemma \ref{lem: bounding h}}
 We show $h(\gamma)=o(\sqrt{\log(1/\gamma)})$ as $\gamma \to 1$. We evaluate each term in the expression $h(\gamma):=-\log\left( L_{\gamma} \right) +L_{\gamma}\log(\gamma) + 2\log(1- \sigma_{L_\gamma}^2)+
  \log(\sqrt{2\pi})$. Since $\log(\gamma) = -(1-\gamma) + o(1-\gamma)$ as $\gamma\to 1$, we have $L_{\gamma} \log(\gamma) \to 0$. In addition, $2\log(1- \sigma_{L_{\gamma}}^2)\to 0$ since by \eqref{eq: posterior var}, $\sigma_{L_{\gamma}}^2 \leq \sigma_W^2 / L_\gamma  \to 0$ as $\gamma \to 1$.  Finally, $\log(L_{\gamma}) =2\log(\sigma)+ 2\log \log\left(\frac{1}{1-\gamma}\right) = o\left( \sqrt{\log\left(\frac{1}{1-\gamma}\right)} \right)$.
  
\subsection{Further Justification for Equation \ref{eq: value function def}.}\label{subsec: value function def} 
Equation \eqref{eq: value function def} relies on Doob's optional-sampling theorem. Here we  note the technical conditions ensuring this applies. Let $\mathcal{H}_{t}$ denote the sigma-algebra generated by $R_0,\ldots, R_{t-1}$ and let $\tau$ be any stopping time with respect to $\{\mathcal{H}_{t} : t\in 0,1,\ldots \}$. Define the martingale  $M=\{M_{n} : n=0,1,\ldots\}$  by 
\[
M_n = \sum_{t=0}^{n} \gamma^{t} (\theta - \E[\theta \mid \mathcal{H}_{t-1}]). 
\]
For each fixed $n$, $\E[M_n]=0$. Equation \eqref{eq: value function def} states that $\E[M_{\tau}]=0.$ (To compare, recall the definition $\mu_t=\E[\theta \mid \mathcal{H}_{t-1}]$). This follows by Doob's optional sampling theorem since $M$ is a uniformly integrable martingale. To show $M$ is uniformly integrable, it suffices to show it is bounded in $L^2$. We have 
\begin{align*}
\sup_{n} \E\left[ M_n^2 \right] = \sup_{n} \sum_{t=0}^{n} \gamma^t \mathbb{E} \left[ (\theta - \E[\theta\mid \mathcal{H}_{t-1}]  )^2\right] = \sum_{t=0}^{\infty} \gamma^{t}\E\left[{\rm Var}(\theta \mid \mathcal{H}_{t-1})\right]  \leq  \sum_{t=0}^{\infty} \gamma^{t} {\rm Var}(\theta) < \infty  \end{align*}
where the inequality $\E\left[{\rm Var}(\theta \mid \mathcal{H}_{t-1})\right]  \leq {\rm Var}(\theta) $ is standard and follows from Jensen's inequality for conditional expectations. 
\end{document}